\def\eqref#1{equation~\ref{#1}}
\def\1{\bm{1}}
\DeclareMathAlphabet{\mathsfit}{\encodingdefault}{\sfdefault}{m}{sl}
\SetMathAlphabet{\mathsfit}{bold}{\encodingdefault}{\sfdefault}{bx}{n}
\newcolumntype{L}[1]{>{\raggedright\let\newline\\\arraybackslash\hspace{0pt}}m{#1}}
\newcolumntype{C}[1]{>{\centering\arraybackslash}p{#1}}
\newcolumntype{R}[1]{>{\raggedleft\let\newline\\\arraybackslash\hspace{0pt}}m{#1}}
\theoremstyle{plain}
\newtheorem{theorem}{Theorem}[section]
\theoremstyle{definition}
\newtheorem{claim}[theorem]{Claim}
\newtheorem{definition}[theorem]{Definition}
\theoremstyle{remark}
\title{Two Failures of Self-Consistency in the Multi-Step Reasoning of LLMs}
\author{\name Angelica Chen \email ac5968@nyu.edu \\
      \addr Center for Data Science, New York University
      \AND
      \name Jason Phang \email zp489@nyu.edu \\
      \addr Center for Data Science, New York University
      \AND
      \name Alicia Parrish \email avp295@nyu.edu\\
      \addr Department of Linguistics, New York University; Google
      \AND
      \name Vishakh Padmakumar \email vp1271@nyu.edu\\
      \addr Center for Data Science, New York University
      \AND Chen Zhao \email cz1285@nyu.edu\\
      \addr NYU Shanghai; Center for Data Science, New York University
      \AND Samuel R. Bowman \email sb6065@nyu.edu \\
      \addr Center for Data Science, New York University; Anthropic
      \AND Kyunghyun Cho \email kc119@nyu.edu \\
      \addr  Center for Data Science, New York University
      }
\begin{document}

\maketitle

\begin{abstract}
Large language models (LLMs) have achieved widespread success on a variety of in-context few-shot tasks, but this success is typically evaluated via correctness rather than consistency. We argue that self-consistency is an important criteria for valid multi-step reasoning in tasks where the solution is composed of the answers to multiple sub-steps. We propose two types of self-consistency that are particularly important for multi-step reasoning -- hypothetical consistency (a model's ability to predict what its output would be in a hypothetical other context) and compositional consistency (consistency of a model's final outputs when intermediate sub-steps are replaced with the model's outputs for those steps). We demonstrate that multiple variants of the GPT-3/-4 models exhibit poor consistency rates across both types of consistency on a variety of tasks.
\end{abstract}

\begin{wrapfigure}{R}{0.4\textwidth}
    \centering
    \includegraphics[width=\linewidth]{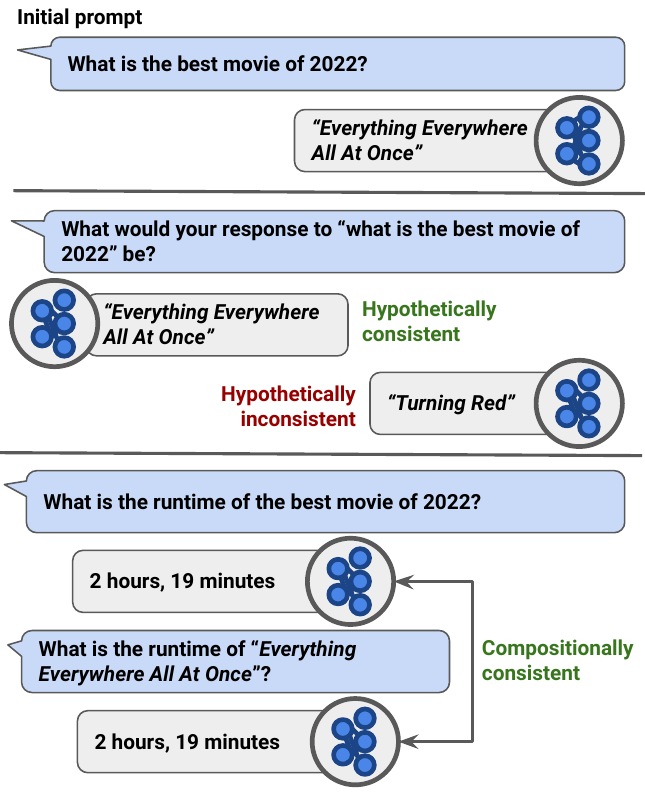}
    \caption{An overview of the two types of self-consistency failures we identify in LLMs.}
    \label{fig:method}
    \vspace{-15pt}
\end{wrapfigure}

\section{Introduction}\label{sec:intro}
An important property of logically valid machine learning systems is \emph{self-consistency} -- \emph{i.e.}, the requirement that no two statements given by the system are contradictory. Pre-trained large language models (LLMs), despite demonstrating impressive few-shot accuracy on a variety of multi-step reasoning tasks, often give inconsistent responses to questions \citep{mitchell2022enhancing, kassner-etal-2021-beliefbank} and factual knowledge-seeking prompts \citep{elazar_measuring_2021}. Without self-consistency, it is difficult to consider LLMs reliable or trustworthy systems.
\citet{elazar_measuring_2021} defines self-consistency as the invariance of an LLM's responses across different types of semantics-preserving \emph{prompt transformations}. 
In this work, we seek to introduce and explore LLM self-consistency over two new types of transformations (shown in Figure \ref{fig:method}) that we argue are important for valid multi-step reasoning. 

\paragraph{Hypothetical Transformations} A \emph{hypothetical transformation} is an indirect phrasing of a prompt that queries the model for what its response would hypothetically be in some other context, such as ``what would your response to \texttt{<prompt>} be?'' or ``what would the next 5 words in your completion of \texttt{<prompt>} be?'' Consistency over hypothetical transformations implies that an LLM has some stored knowledge or computational path for determining what its response would be to some prompt $p$ without explicitly being prompted with exactly $p$ itself. This can be useful for prompts involving multi-step reasoning, where the LLM must have knowledge of its responses to the earlier steps in order to compute its responses to downstream steps. Like in Figure \ref{fig:method}, given the prompt ``What is the runtime of the best movie of 2022'' the LLM must either have stored or computed its response to ``what is the best movie of 2022?'' in order to answer the full prompt.
    
\paragraph{Compositional Transformations} For a prompt that involves multiple interdependent steps of reasoning, a \emph{compositional transformation} consists of replacing some intermediate step with the model's output to the previous step. In the previous example, if the LLM outputs the response ``\emph{Everything Everywhere All At Once}'' to the prompt ``What is the best movie of 2022?,'' then the prompt ``What is the runtime of \emph{Everything Everywhere All At Once}?'' is a compositional transformation of ``What is the runtime of the best movie of 2022?'' (See Figure \ref{fig:method}.) Consistency over compositional transformations is also important for logically valid multi-step reasoning when the LLM must give a direct response -- without it, the LLM may give contradictory direct responses to different multi-step prompts that are in fact querying for the same thing.

In this work, we investigate the degree to which LLMs are self-consistent on these prompt transformations across a variety of tasks.
We formalize our definitions of hypothetical and compositional consistency (Section \ref{sec:consistency-definitions-formal}) and show empirically that a wide range of pre-trained language models demonstrate low consistency rates on both hypothetical transformations (Section \ref{sec:hypothetical-consistency}) and compositional transformations (Section \ref{sec:compositional-consistency}).

\section{Formalizing Consistency}\label{sec:consistency-definitions-formal}
To make more precise our definitions of consistency and the semantics-preserving transformations that they entail, we first formalize our definitions prior to conducting our empirical evaluations.

\subsection{Preliminaries} \label{sec:definitions}

Let vocabulary $\mathcal{V}$ be a finite set of tokens, $\mathcal{V}^*$ be the set of all possible finite-length sequences formed by concatenating zero or more tokens from $\mathcal{V}$, and $p_\theta:\mathcal{V}\to\{0,1\}$ be an auto-regressive language model that defines a probability distribution over tokens $v\in\mathcal{V}$. $p_\theta$ can be used to generate a sequence via greedy decoding as follows:
\begin{equation}
\tilde{y}_t=\arg\max_{v\in\mathcal{V}}\,\log p_\theta (y_t=v\,|\,c;\tilde{y}_{<t})
\end{equation}
given some context sequence $c\in\mathcal{V}^*$, until some time step $T$ for which $\tilde{y}_T=\texttt{[EOS]}$, the end-of-sequence token. For ease of notation, we denote the greedy decoding output of a model $p_\theta$ as
\begin{align}
g_{p_\theta}(c) = ( &\arg\max_{v\in\mathcal{V}} p_\theta(y=v|c),\cdots, \\
                    &\arg\max_{v\in\mathcal{V}} p_\theta(y=v|c;\tilde{y}_{<T})). \nonumber
\end{align}

We also define an operator $\sim$ that indicates when two strings are \emph{semantically equivalent}. Although the precise definition of semantic equivalence will vary across different tasks, we use it to loosely refer to pairs of strings that can be used interchangeably (give or take syntactic adjustments) without changing the meaning of the overall utterance. Lastly, the $\sim$ operator is also reflexive, symmetric, and transitive.

\subsection{Composing prompts}
Reasoning with language often also involves composing prompts -- for instance, we might ask ``what is the answer to $2\times3+4$?'', which can be seen as the composition of a \emph{prompt template} ``what is the answer to $\_+4$?'' with the prompt ``$2\times 3$'', where the ``$\_$'' symbol in the former string is substituted with the latter string. This corresponds to a multi-step task where the model might first answer the prompt ``$2\times 3$'' (yielding $g_{p_\theta}(``2\times 3")$), substitute $g_{p_\theta}(``2\times 3")$ into the template (yielding the composed prompt ``what is the answer to $g_{p_\theta}(``2\times 3")+4$?,'' where the $g_{p_\theta}(``2\times 3")$ is replaced with the actual output string), and then answer the filled-in template.

To denote such prompt templates, we define $\mathcal{P}'$, the set of prompts $p\in\mathcal{V}^*$ that contain exactly one ``\_'' symbol. Additionally, the function $f(p',p):\mathcal{P}'\times\mathcal{V}^*\to\mathcal{V}^*$ denotes substitution of $p$ for the ``\_'' symbol in $p'$.\footnote{In practice, substituting $p$ into $p'$ may require minor syntactic adjustments for linguistic acceptability, but we omit these in our notation since the semantics remain the same.}

$f$ also has some useful properties that we will use in our later definitions:
\begin{itemize}
\item We can trivially represent any prompt $p\in\mathcal{V}^*$ as the substitution of itself into the \emph{identity prompt template} ``\_'' by writing $p=f(``\_", p)$.
\item $p\sim q$ if and only if $f(p',p)\sim f(p',q)$ for all $p'\in\mathcal{P}'$.
\end{itemize}

\subsection{Definitions}  We start out by restating the general definition of self-consistency, as it has been commonly defined in past literature \citep{elazar_measuring_2021, jang-etal-2022-becel}.

\begin{definition}[Self-consistency]\label{def:self-consistency}
$p_\theta$ is \emph{self-consistent} if $p\sim q \rightarrow g_{p_\theta}(p)\sim g_{p_\theta}(q)$ for all $p,q\in\mathcal{V}^*$.
\end{definition}
In other words, a self-consistent model gives semantically-equivalent responses to semantically equivalent prompts. These semantically equivalent pairs of prompts ($p, q$ in Definition \ref{def:self-consistency}) can take many forms, including hypothetical and compositional transformations.

\begin{definition}[Hypothetical Transformation]\label{def:identity-transformation} Let $\mathcal{P}'_I$ denote the set of \emph{hypothetical transformation prompt templates}, which are prompt templates $p'\in\mathcal{P}'$ such that $f(p',p)\sim f(\_,p)\,\forall p\in\mathcal{V}^*$. Then the set of \emph{hypothetical transformations} of prompt $p$ can be denoted as $\mathcal{P}_I(p):=\{f(p',p)\,|\,p'\in\mathcal{P}'_I\}$.
\end{definition}

Since $f(\_,p)\sim p$, a model that is self-consistent must yield $g_{p_\theta}(f(p',p))\sim g_{p_\theta}(p)$ for all $p'\in\mathcal{P}_I'$.

Although we defined hypothetical transformations with respect to all prompts $p\in\mathcal{V}^*$, our definition of compositional transformations must be more restricted, since we care only to apply compositional transformations to prompts that implicitly encode a compositional task. That is, we are concerned only with prompts that are already compositions -- \emph{i.e.}, prompts of the form $f(p',p)$. Furthermore, given some target model $p^*$ that represents the ground truth or gold distribution, the response to a compositional prompt (as generated by $p^*$) is semantically equivalent to the prompt itself. That is, $f(p',p)\sim g_{p^*}(f(p',p))$. For example, if $p=``2"$ and $p'=``4+\_"$, then $f(p',p)=``4+2"$ and $g_{p^*}(f(p',p))=g_{p^*}(``4+2")=6\sim ``4+2"= f(p',p)$, so $f(p',p)$ is compositional.

\begin{definition}[Compositional prompt]\label{def:compositional-prompt}
    We define the set of \emph{compositional prompts} as $\mathcal{P}_\text{Comp}:=\{f(p',p) \,|\, f(p',p)\sim g_{p^*}(f(p',p)), p\in\mathcal{V}^*, p'\in\mathcal{P}'\}$ given gold distribution $p^*$.
\end{definition}



\begin{definition}[Compositional transformation]\label{def:compositional-transformation} For prompt compositions $p\in\mathcal{P}_\text{Comp}$ and $f(p',p)\,\in\mathcal{P}_\text{Comp}$ both representing compositional tasks, the \emph{compositional transformation} with respect to model $p_\theta$ is $f(p', g_{p_\theta}(p))$.
\end{definition}

Given the above two types of prompt transformations, we can define narrower types of LLM self-consistency.
\begin{definition}[Hypothetical consistency]\label{def:consistency-hypothetical}
A model $p_\theta$ is \emph{hypothetically consistent} if $g_{p_\theta}(p) \sim g_{p_\theta}(f(p',p))$ for any prompt $p\in\mathcal{V}^*$ and hypothetical transformation prompt template $p'\in\mathcal{P}'_I$.
\end{definition}

\begin{claim}\label{claim:sc-hypothetical}
If $p_\theta$ is self-consistent, then $p_\theta$ is also hypothetically consistent.
\end{claim}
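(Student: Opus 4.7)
The plan is to unwrap the definition of a hypothetical transformation template, which forces it to produce prompts semantically equivalent to the original, and then to invoke the $\Leftarrow$ direction of self-consistency to transport that equivalence through the greedy decoder $g_{p_\theta}$.

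Concretely, I would fix an arbitrary prompt $p\in\mathcal{V}^*$ and an arbitrary hypothetical transformation template $p'\in\mathcal{P}'_I$. By Definition \ref{def:identity-transformation}, every such template satisfies $f(p',p)\sim f(``\_",p)$, and the paper already notes the identity $f(``\_",p)=p$, so chaining these (and using that $\sim$ is transitive and reflexive) gives $f(p',p)\sim p$.

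Next I would apply the hypothesis that $p_\theta$ is self-consistent, instantiated at the pair $(p,\,f(p',p))$. The $\Leftarrow$ direction of Definition \ref{def:self-consistency} says that $q\sim r$ implies $g_{p_\theta}(q)\sim g_{p_\theta}(r)$, so from $f(p',p)\sim p$ we immediately conclude $g_{p_\theta}(p)\sim g_{p_\theta}(f(p',p))$. Since $p$ and $p'\in\mathcal{P}'_I$ were arbitrary, this matches Definition \ref{def:consistency-hypothetical} verbatim, yielding hypothetical consistency.

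There isn't really a hard step here; the only thing one has to be careful about is which direction of the biconditional in self-consistency is invoked (we use only $p\sim q \Rightarrow g_{p_\theta}(p)\sim g_{p_\theta}(q)$, not its converse), and to justify the substitution $f(``\_",p)=p$ via the trivial identity-template property already recorded in the preliminaries. The proof is essentially a two-line derivation that chains the defining property of $\mathcal{P}'_I$ with one direction of self-consistency.
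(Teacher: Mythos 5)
Your proof is correct and follows essentially the same route as the paper's: chain $f(p',p)\sim f(``\_",p)$ (from Definition \ref{def:identity-transformation}) with $f(``\_",p)\sim p$ (or the stronger identity $f(``\_",p)=p$), then apply the $\Leftarrow$ direction of Definition \ref{def:self-consistency} to push the equivalence through $g_{p_\theta}$. Your extra care in flagging which direction of the biconditional is used is a small clarity gain over the paper's wording, but it is the same argument.
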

\begin{proof}
Consider prompt $p\in\mathcal{V}^*$ and hypothetical transformation prompt template $p\in\mathcal{P}_I'$. Since $f(p',p)\sim f(\_,p)$ (by Definition \ref{def:identity-transformation}) and $f(\_,p)\sim p$, then $f(p',p)\sim p$ by the transitive property of $\sim$. Then by Definition \ref{def:self-consistency}, it follows that $g_{p_\theta}(f(p',p))\sim g_{p_\theta}(p)$.
\end{proof}


\begin{definition}[Consistency over compositional transformations]\label{def:consistency-compositional} A model $p_\theta$ is \emph{compositionally consistent} when, for all pairs of compositional prompts $p\in\mathcal{P}_\text{Comp}$ and $f(p',p)\in\mathcal{P}_\text{Comp}$,
\begin{enumerate}
    \item $p\sim g_{p_\theta}(p)$ and $g_{p_\theta}(f(p',p))\sim g_{p_\theta}(f(p',g_{p_\theta}(p)))$
\end{enumerate}
and is \emph{compositionally inconsistent} when:
\begin{enumerate}
    \item $p\nsim g_{p_\theta}(p) \text{ and } g_{p_\theta}(f(p',p))\sim g_{p_\theta}(f(p',g_{p_\theta}(p)))$
    \item $p\sim g_{p_\theta}(p)$ and $g_{p_\theta}(f(p',p))\nsim g_{p_\theta}(f(p',g_{p_\theta}(p)))$
\end{enumerate}

One more case exists that we do not count into our measures of either compositional consistency or inconsistency. If $p\nsim g_{p_\theta}(p)$ and $g_{p_\theta}(f(p',p))\nsim g_{p_\theta}(f(p',g_{p_\theta}(p)))$, then we cannot say this is compositionally consistent because the model's output for $p$ does not necessarily relate to or imply its output to $f(p', p)$. However, we also cannot necessarily say that it is compositionally inconsistent to give nonequivalent responses to $f(p',p)$ and $f(p', g_{p_\theta}(p))$ if the model's responses to $p$ and $g_{p_\theta}(p)$ are also nonequivalent.
\end{definition}

\section{Evaluating Consistency on Hypothetical Transformations} \label{sec:hypothetical-consistency}
In our above definitions, hypothetical consistency is characterized as a binary -- either a model is hypothetically consistent across \emph{all} pairs $(p,p')\in\mathcal{V}^*\times\mathcal{P}_I'$ or it is hypothetically inconsistent. But in practice, models are only hypothetically consistent in some cases, and it is likely impossible to achieve hypothetical consistency across all $(p,p')\in\mathcal{V}^*\times\mathcal{P}_I'$. Instead, we explore the \emph{degree} to which LLM outputs are invariant to hypothetical transformations of the prompt. This is measured as the \textbf{hypothetical consistency rate}, which is the proportion of pairs $(p,p')\in\mathcal{V}^*\times\mathcal{P}_I'$ for which a model $p_\theta$ exhibits the property $g_{p_\theta}(p)\sim g_{p_\theta}(f(p',p))$. To measure hypothetical consistency rate, we devise a set of four hypothetical transformation prompt templates that we use to transform randomly sampled prompts (sourced from Wikipedia and DailyDialog) into hypothetical prompts, as shown in Table \ref{tab:hypothetical-prompt-templates}. We average the hypothetical consistency rate over these prompts to mitigate the model's sensitivity to prompt wording \citep{elazar_measuring_2021}.
 
Evaluating hypothetical consistency requires checking whether $g_{p_\theta}(p)\sim g_{p_\theta}(f(p',p))$. However, checking this semantic equivalence is non-trivial -- the same idea can be expressed in a number of different but synonymous ways. Rather than attempt to devise an automatic method for evaluating semantic equivalence, we instead use a multiple-choice set-up. One answer choice is the continuation of the initial prompt (denoted by ``\verb|<prompt>|'') sourced from a text dataset, one choice is the model's own greedily decoded completion for \verb|<prompt>|, and the three remaining choices are the other models' completions for \verb|<prompt>|. As discussed before, a model that is hypothetically consistent can, in a sense, predict its own completion. Thus, the model should be more likely to generate the answer choice that corresponds to its own completion than to the other answer choices. These templates are designed both to query the model on what its completion would hypothetically be for a given prompt and to evaluate whether the model can distinguish its own completions from those of other models.
 
 As an example, suppose the original prompt sourced from Wikipedia is ``This quilt begun in 1856 when she was seventeen includes the autographs on top of the blocks of many known celebrities and politicians of the day. Other''. Suppose that the first three words of the completions generated by OpenAI GPT-3 models \texttt{ada-001}, \texttt{babbage-001}, \texttt{curie-001}, and \texttt{davinci-003} are ``notable quilt authors,'' ``famous quilts include,'' ``signatures include abolitionists,'' and ``notable figures whose,'' respectively. Additionally, the next three words of the Wikipedia article are ``figures represented on.'' Then a hypothetical transformation prompt that uses the first template in Table \ref{tab:hypothetical-prompt-templates} might look like:
\begin{quote}
I predict that the next 3 words after ``This quilt begun in 1856 when she was seventeen includes the autographs on top of the blocks of many known celebrities and politicians of the day. Other'' would be

A) famous quilts include \\
B) figures represented on \\
C) notable quilt authors \\
D) signatures include abolitionists \\
E) notable figures whose \\

Answer:
\end{quote}
If the model being evaluated is \texttt{davinci-003}, then the correct answer would be E. In this context, $g_\texttt{davinci-003}(\text{``I predict that... Other'' would be''})=\text{``E''}\sim\text{``notable figures whose''}=g_\texttt{davinci-003}(\text{``This quilt begun... Other''})$, which satisfies Definition \ref{def:consistency-hypothetical}. As mentioned in Section \ref{sec:hyp-cons-exp-setup}, all hypothetical prompts are few-shot (where the provided labels are the answer choices that correspond to the evaluated model's completion), and only case-insensitive exact-match answers (\emph{i.e.} ``a/A/b/B/c/C/d/D'') are accepted as correct.

Although this experimental set-up eases the burden of checking the semantic equivalence of $g_{p_\theta}(p)\sim g_{p_\theta}(f(p',p))$, it is likely still a lower bound on the true hypothetical consistency, since it does not account for other model outputs that may be semantically equivalent to $g_{p_\theta}(p)$. In the example above, had \texttt{davinci-003} generated ``notable people whose'' instead of the letter ``E,'' this response would have still been marked incorrect despite the semantic equivalence of ``notable figures whose'' and ``notable people whose.'' Furthermore, the definition of semantic equivalence here depends implicitly on the prompt $p$. Nevertheless, we still find it useful to analyze these lower bounds -- examinations of the failure modes of LLMs can aid in the future improvements of these models. We leave to future work the question of how far this lower bound is from the true hypothetical consistency rate.

\begin{table*}[h]
\caption{Prompt templates used to evaluate whether LLMs are consistent across hypothetical prompt transformations. \texttt{<prompt>} is a prompt sourced from a dataset (e.g. Wikipedia, DailyDialog), \texttt{<m>} is the number of words of its own completion that the model is asked to predict, and \texttt{<answer\_choices>} are the multiple-choice answer choices that the model is given.\label{tab:hypothetical-prompt-templates}}
\centering
\begin{tabular}{p{0.95\textwidth}} \toprule
\textbf{Hypothetical Transformation Prompt Templates} \\ \midrule\midrule
"I predict that the next \verb|<m>| words after "\verb|<prompt>|" would be \verb|<answer_choices>|. Answer:"  \\ 
"Given the prompt "\verb|<prompt>|", my next \verb|<m>| words would be \verb|<answer_choices>|. Answer:"\\ 
"Given the context "\verb|<prompt>|", my next \verb|<m>| words would be \verb|<answer_choices>|. Answer:"\\ 
"I predict that after the context "\verb|<prompt>|" the next \verb|<m>| words I would say are \verb|<answer_choices>|. Answer:"  \\ \bottomrule
\end{tabular}

\end{table*}

We conduct our hypothetical consistency experiments with original prompts sourced from two language modeling tasks:
\begin{itemize}
    \item \textbf{Wikipedia} Since language models are frequently pre-trained on Wikipedia archives, evaluating on a Wikipedia dataset can confound information memorized during pre-training with the skill being evaluated. To address this issue, we collect a sample of 400 English Wikipedia \citep{wikidump} articles that were created on or after June 30, 2021, since the OpenAI documentation \citep{openai_date_cutoff} indicates that the latest pre-training data for the \texttt{ada-}/\texttt{babbage-}/\texttt{curie-}/\texttt{davinci-001} and \texttt{davinci-003} models contains documents dating up to June 2021. Each initial prompt is a randomly selected segment of a Wikipedia article consisting of two full sentences followed by a random-length prefix of the next sentence.
    \item \textbf{DailyDialog}: DailyDialog \citep{li-etal-2017-dailydialog} is a manually labeled dataset of multi-turn conversations about daily life. We choose this dataset because it contains language that is more colloquial in style and less factual in content than the Wikipedia dataset. We randomly sample 400 examples from the training split and use the first conversational turn as the initial prompt.
\end{itemize}

We use only the prompts for which all five answer choices are distinct. We also vary the number of words $m$ in the original completion that the model is asked to distinguish from 1 to 6, since the difficulty may vary depending on the length of the original completion. We then compute the hypothetical consistency rate by calculating the proportion of the time that the model generated the letter of the answer choice corresponding to its own completion.

\subsection{Experimental Setup}\label{sec:hyp-cons-exp-setup}
 In all experiments, we evaluate four model sizes of the OpenAI GPT-3 model \citep{brown_etal_2020_gpt3} -- \verb|ada-001|, \verb|babbage-001|, \verb|curie-001|, and \verb|davinci-003| (in order of increasing capacity).\footnote{We select this particular set of models since it is the most recent set of text completion (rather than chat) models available for each size of GPT-3. However, we also compare against \texttt{text-davinci-001} and \texttt{gpt4} in Appendix \ref{app:davinci-001}. We separate \texttt{davinci-001} and \texttt{gpt4} into separate analyses because \texttt{davinci-001}, \texttt{davinci-003}, and \texttt{gpt4} often output the same original completions. In our multiple-choice set-up, this results in duplicate answer choices. As such, we only ever include completions from \textbf{one} of \texttt{davinci-001}, \texttt{davinci-003}, and \texttt{gpt4} in any hypothetical consistency prompt. Since \texttt{gpt4} is a chat model whereas the rest of the models are completion models, we choose to analyze the completion models in the main text and leave comparison against \texttt{gpt4} for the Appendix.} All experiments are run using greedily decoded completions obtained from the OpenAI API from Aug. 2022 to Jun. 2023. We use 0-shot initial prompts but evaluate hypothetical consistency prompts using $k$-shot prompts, where $k$ ranges from 1 to 10. Since the in-context performance of LLMs is known to vary depending on the selection of in-context examples \citep{liu2021makes, rubin-etal-2022-learning}, we randomly select a different set of in-context examples for each prompt. We also randomize the order of answer choices for each multiple-choice question to mitigate sensitivity to answer choice order \citep{pezeshkpour2023large}. Further evaluations for other combinations of models can be found in Appendix \ref{app:davinci-001}.

\begin{figure}[ht!]
    \centering
    \includegraphics[width=0.9\linewidth]{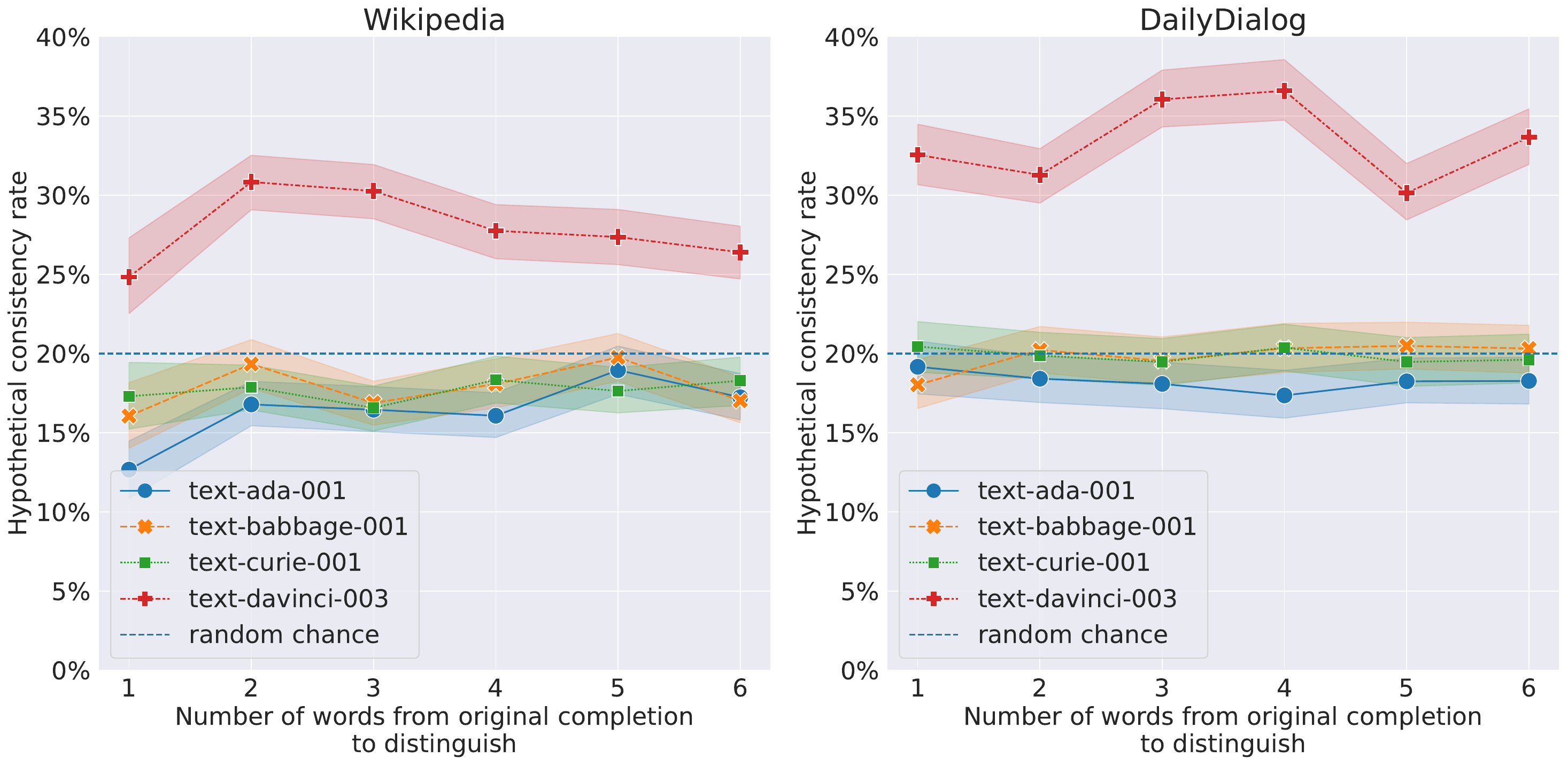}
    \caption{Hypothetical consistency rates on multiple-choice self-knowledge prompts for the Wikipedia and DailyDialog datasets, across the four GPT-3 model sizes. Each line is the average taken across all $k$-shot prompts, for $k\in[1,\cdots,10]$. The shaded region represents the 95\% confidence interval computed with nonparametric bootstrapping. The label ``number of words from original completion to distinguish'' corresponds to the quantity $m$ in Table \ref{tab:hypothetical-prompt-templates}.}
    \label{fig:wiki-dd-sk-acc}
\end{figure}

\begin{figure}[ht!]
    \centering
    \includegraphics[width=0.9\linewidth]{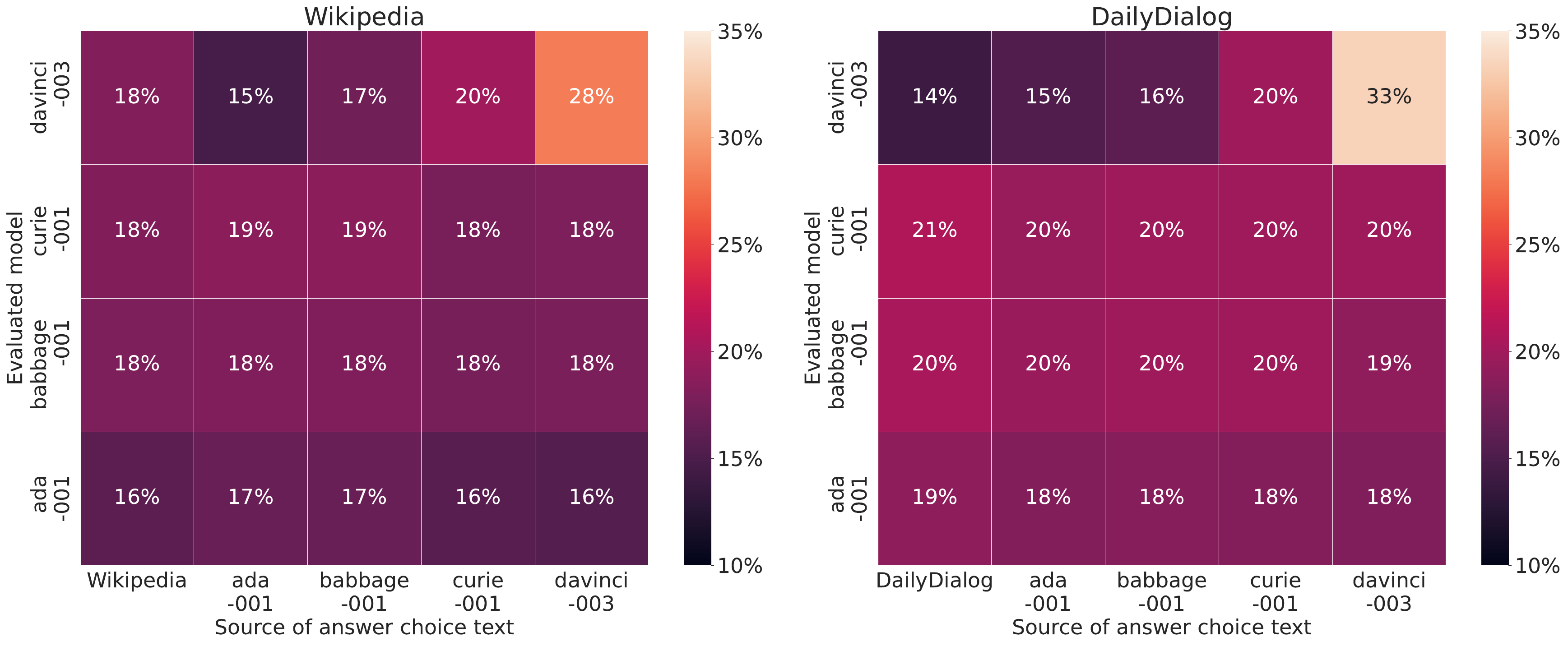}
    \caption{A more detailed breakdown of the numbers in Figure \ref{fig:wiki-dd-sk-acc}: the percentage of the time that each model selects each possible answer choice when prompted with a hypothetical consistency prompt, averaged across all prompts (\emph{i.e.} across all $m$, the number of words that the model is asked to predict; and $k$, the number of few-shot examples). The columns labeled ``Wikipedia'' and ``DailyDialog'' correspond to the answer choice containing the completion from the original dataset. Model outputs that could not be parsed into an answer choice are not included.}
    \label{fig:hyp-cons-answer-choices}
\end{figure}

\begin{table*}[ht!]
\caption{Average percent edit distances between completions from \texttt{davinci-003} versus completions from the three other models and two datasets.} \label{tab:edit-distances-davinci003}
\begin{center}
\begin{small}
\begin{tabular}{lR{3cm}R{3cm}R{3cm}R{3cm}} \toprule
{Dataset} & {\texttt{davinci-003} / \texttt{ada-001}} & {\texttt{davinci-003} / \texttt{babbage-001}} & {\texttt{davinci-003} / \texttt{curie-001}} & {\texttt{davinci-003} / Dataset} \\ \midrule
Wikipedia & 72.8\% &                   69.7\% &                 65.0\% &                     70.3\% \\ 
DailyDialog & 75.8\% &                   75.1\% &                 74.2\% &                        79.0\% \\ \bottomrule
\end{tabular}
\end{small}
\end{center}
\end{table*}

\subsection{All Model Sizes Perform Poorly At Distinguishing Their Own Completions}
Figure \ref{fig:wiki-dd-sk-acc} shows GPT-3's hypothetical consistency rates averaged over all few-shot prompts. Notably, all model sizes smaller than \verb|davinci-003| perform at about random chance on this task, regardless of how many words of the original completion the model is tasked with predicting. \verb|davinci-003| is the only model size that consistently performs above random chance, but even then its accuracy ranges only from 26\% to 31\% for Wikipedia and 30\% to 37\% for DailyDialog. We might also expect hypothetical consistency rate to increase as a function of $m$ (because longer sequences provide more information for the model to use to distinguish its own completion from other sequences), but we do not observe this trend for these models. In Appendix \ref{app:davinci-001}, however, we do observe this trend for \texttt{gpt4}. It appears that higher-capacity and generally more powerful models are overall more hypothetically consistent.

We also inspect the frequency with which each model selects each possible answer choice, as shown in Figure \ref{fig:hyp-cons-answer-choices}. For a highly hypothetically consistent model that has not memorized the dataset, we would expect it to select the answer choice corresponding to its own completion the vast majority of the time, and to only select the other answer choices a negligible proportion of the time. However, for both tasks, only \verb|davinci-003| demonstrates a noticeable preference for its own completion over others. Furthermore, none of the other models display a preference for any other answer choice, including the completions of the other models and the continuation from the original dataset. Despite our use of multiple prompt formats, \texttt{ada-001}, \texttt{babbage-001}, and \texttt{curie-001} all make random choices and cannot predict their own completions.

\verb|davinci-003|'s moderate hypothetical consistency also cannot easily be attributed to dataset memorization, for a couple of reasons. Firstly, we selected only hypothetical consistency prompts for which all five answer choices were distinct -- ergo, \verb|davinci-003|'s completion could not have been identical to that of either Wikipedia or DailyDialog. Secondly, we also computed the average percent edit distance (the edit distance divided by the length of the longer string) between the completions of \verb|davinci-003| and the completions of the smaller models and datasets, as shown in Table \ref{tab:edit-distances-davinci003}. Across both datasets, the \verb|davinci-003| completions are on average more than 70\% different from the dataset completions. Furthermore, the edit distances in Table \ref{tab:edit-distances-davinci003} do not have high variance in each row, indicating that \verb|davinci-003|'s completions are not significantly more different from or similar to one particular source than the others.

\section{Evaluating Compositional Self-Consistency}
Next, we evaluate compositional self-consistency on the tasks of arithmetic and semantic parsing, since these are tasks for which valid compositional reasoning are important. Both tasks can be framed as computational graphs that are directed and acyclic, with each node having at most one parent -- \emph{i.e.} trees. To evaluate compositional consistency, we store the model's answer to the expression represented by each individual sub-tree and generate compositional consistency prompts by creating copies of the original prompts where a single sub-tree expression has been replaced by the model's output for that sub-tree. If a model is compositionally consistent, then it should give the same answer to the original expression as to the copy with the replaced sub-tree. Below, we further describe the experimental setup and give examples for each of the tasks.

\label{sec:compositional-consistency}
\subsection{Experimental Setup}
We evaluate compositional self-consistency across six models (\texttt{ada-001}, \texttt{babbage-001}, \texttt{curie-001}, \texttt{davinci-001}, \texttt{davinci-003}, and \texttt{gpt4}) and two tasks. Experiments for the first five models were run between Aug. 2022 and Jun. 2023, and experiments for \texttt{gpt4} were run between May and Jun. 2023.

\paragraph*{Synthetic Arithmetic} We generate a set of 400 randomly-nested arithmetic expressions as the initial prompts. We then collect model completions for all possible sub-expressions of each expression using $k$-shot prompts, with $k$ ranging from 3 to 10. We randomly select the in-context examples for each prompt. The arithmetic expressions have a maximum nesting depth of 5 with a nesting probability of 0.5, and each nested expression is enclosed in parentheses to avoid ambiguity. Operators and operands are randomly selected with uniform probability from the sets $\{+,-,/,\times\}$ and $[1,2,\cdots,999]$, respectively. 

For example, for the original arithmetic expression ``$(2\times 3)+(6/2)$,'' we prompt each model with the following three sub-expression prompts, using parentheses to force the correct order of operations:
\begin{align*}
p_1 &= \texttt{ "Q: 2}\times\texttt{3 \textbackslash n A:"} \\ 
p_2 &= \texttt{ "Q: 6/2 \textbackslash n A:"} \\ 
p_3 &= \texttt{ "Q: (2}\times\texttt{3)+(6/2) \textbackslash n A:"}
\end{align*}
For each non-root sub-expression (\emph{i.e.} $p_1$ and $p_2$), we then create a new \emph{compositional consistency prompt} by replacing that sub-expression in the original expression (\emph{i.e.} $p_3$) with the model's completion. For the previous example, if the model answered $p_1$ and $p_2$ correctly, this would result in the following two compositional consistency prompts:
\begin{align*}
p_{CC}^{(1)} &= \texttt{"Q: \textbf{6} + (6/2) \textbackslash n A:"} \\ 
p_{CC}^{(2)} &= \texttt{"Q: (2}\times\texttt{3) + \textbf{3} \textbackslash n A:"}
\end{align*}
For this example, we then compute a model's \emph{compositional consistency rate} as the proportion of the time that the model's output for $p_i$ is correct, and its outputs for $p_{CC}^{(i)}$ and $p_3$ are the same.

\paragraph*{GeoQuery} GeoQuery \citep{zelle-mooney-1996-geoquery} is a semantic parsing dataset consisting of 880 natural language questions about US geography, paired with FunQL \citep{kate-etal-2005-funql} parses of those questions. Similar to the synthetic arithmetic task, we first collect model parses for the spans corresponding to each sub-parse of a sample of 400 GeoQuery training examples via $k$-shot prompts, for $k$ ranging from 3 to 10. We randomly select the in-context examples for each prompt. For example, consider the GeoQuery example ``Which state has the city with the most population?" with corresponding FunQL \texttt{state(loc\_1(largest\_one(population\_1(city(all)))))}.
Then two of the initial prompts we create include the following:
\begin{align*}
p_1 = &\text{ ``Create a FunQL query for the following question: `Which state has the city with the }\\
    &\text{ most population?' A: ''}\\
p_2 = &\text{ ``Create a FunQL query for the following question: `city with the most population'} \\
      &\text{ A: ''}
\end{align*}
where each prompt is sourced from a non-leaf sub-parse of the original gold FunQL expression (and the other sub-parses are omitted here for brevity).

Since evaluating whether $g_{p_\theta}(f(p', p))\sim g_{p_\theta}(f(p', g_{p_\theta}(p))$ would involve interleaving natural language with FunQL in this case, we instead measure compositional consistency as the instances where the model's parse for $p_2$ is correct (\emph{i.e.} $p_2\sim g_{p_\theta}(p_2)$) and is a sub-parse of its parse for $p_1$ (regardless of whether the parse for $p_1$ is correct). This second condition is a slightly more relaxed version of the condition that $g_{p_\theta}(f(p', p))\sim g_{p_\theta}(f(p', g_{p_\theta}(p))$, where we instead assess whether $g_{p_\theta}(p)$ is being used in $g_{p_\theta}(f(p', p))$. Due to this relaxation, the rate we measure here is an upper bound on the true compositional consistency rate.

\begin{figure}[ht!]
    \centering
    \includegraphics[width=0.8\columnwidth]{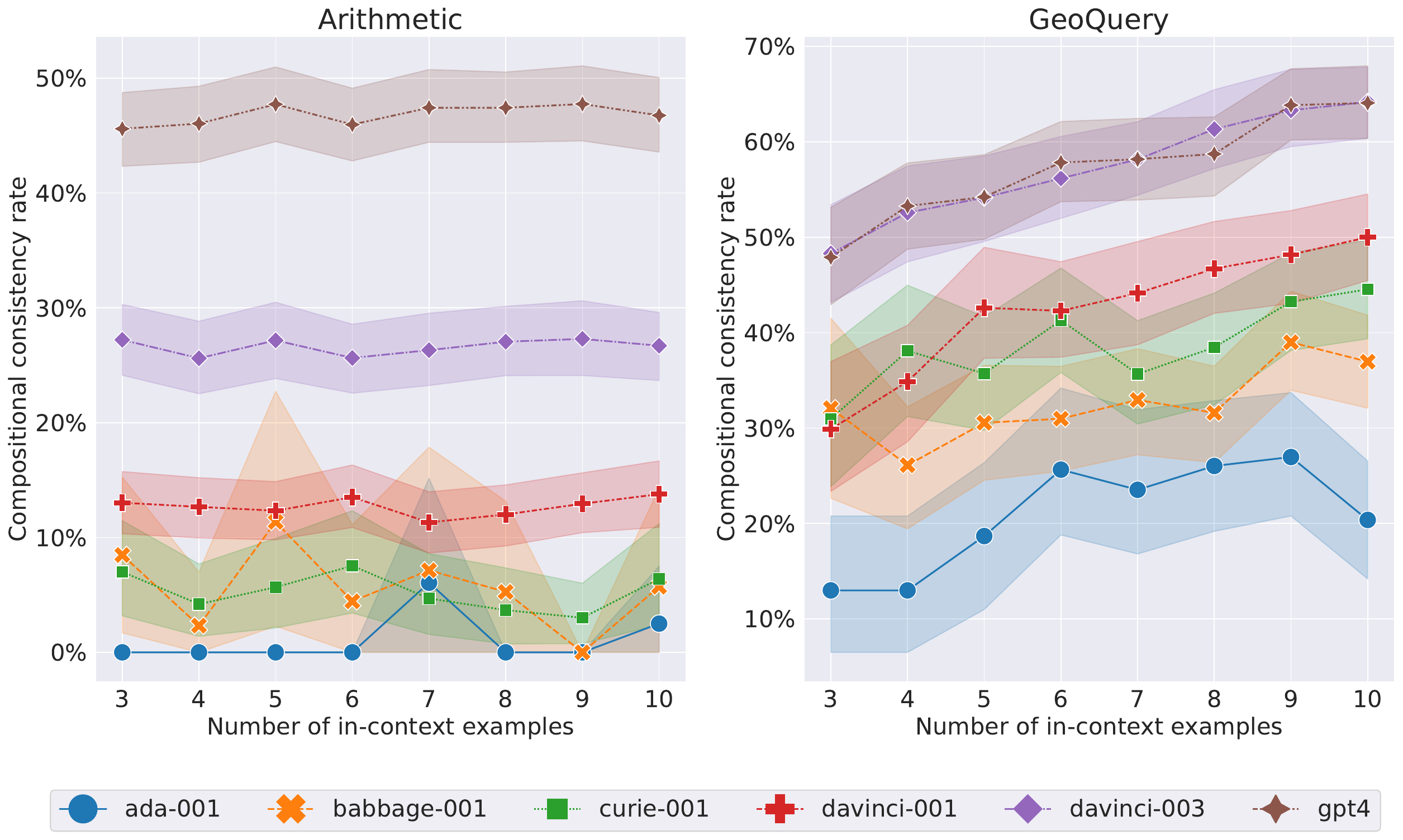}
    \caption{Compositional consistency rates versus the number of in-context examples on the arithmetic and GeoQuery tasks. The shaded region represents the 95\% confidence interval computed with nonparametric bootstrapping.}
    \label{fig:sc}
\end{figure}

\subsection{Results}
The compositional consistency rates for all six models are shown in Figure \ref{fig:sc}. While \verb|davinci-003| and \verb|gpt4| exhibit the highest compositional consistency rates, both are compositionally consistent less than 50\% of the time on the arithmetic task and less than 65\% of the time on the semantic parsing task. However, all models appear to improve in compositional consistency on the GeoQuery task as the number of in-context examples increases. Furthermore, \verb|gpt4| exhibits significantly more compositional consistency on the arithmetic task than even \verb|davinci-003|. Taken together, these results suggest that both increasing the model capacity and the number of in-context examples can offer some improvements in compositional consistency. 

\begin{figure}[ht!]
    \centering
    \includegraphics[width=0.4\textwidth]{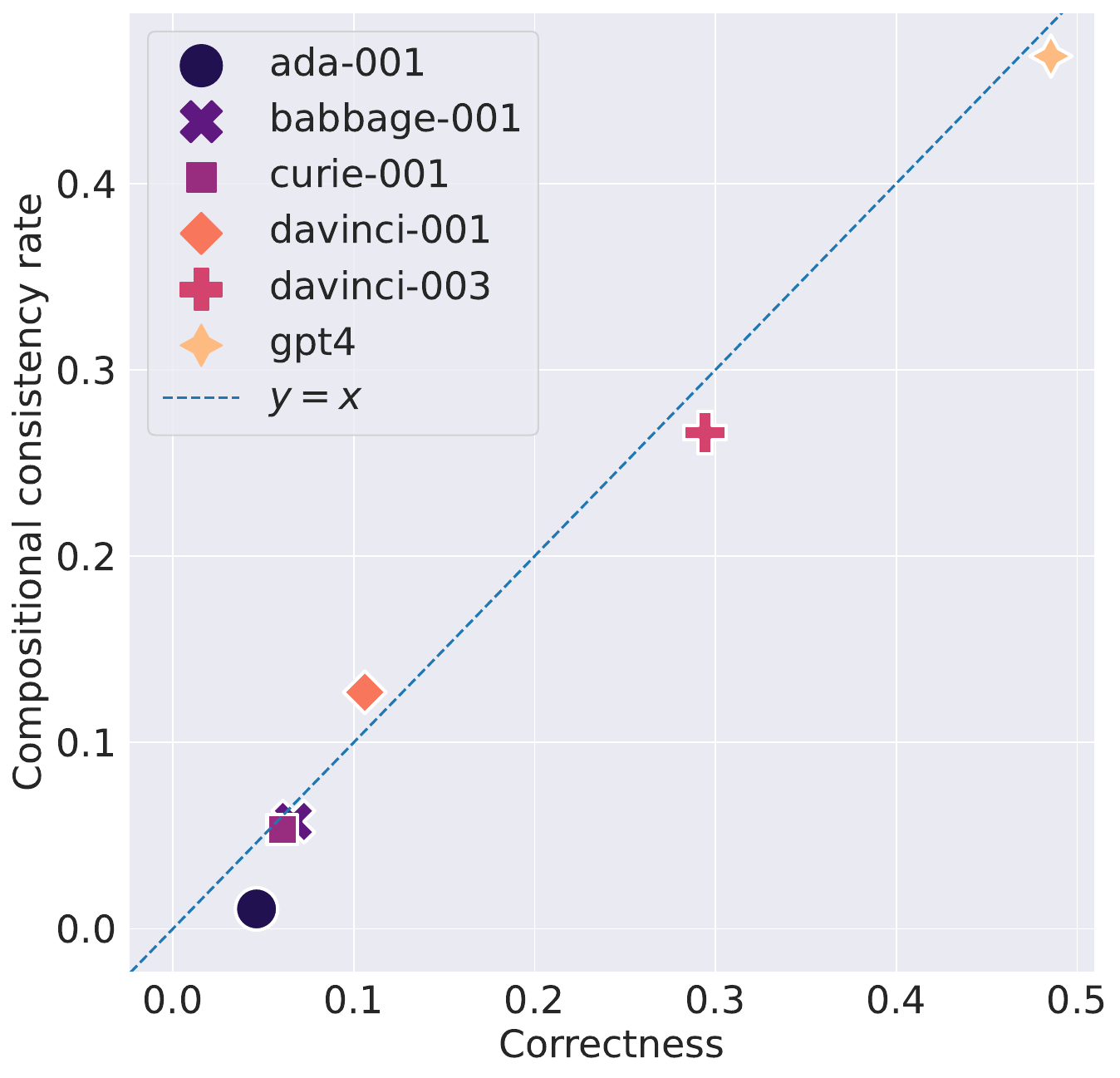}
    \caption{The correctness versus compositional consistency rate of each type of GPT-3 or GPT-4 model on the arithmetic task.}
    \label{fig:sa_corr_vs_cons}
\end{figure}

For instances where the models are compositionally inconsistent, we can analyze the sources of the inconsistency. For arithmetic, 90\% of compositional inconsistencies are caused by the final answer not matching the answer of the compositional transformation, despite the answer to the sub-expression being correct (\emph{i.e.} $p\sim g_{p_\theta}(p)$ but $g_{p_\theta}(f(p', p))\not\sim g_{p_\theta}(f(p', g_{p_\theta}(p))$, or case (2) in the definition of compositional inconsistency in Definition \ref{def:consistency-compositional}). For GeoQuery, approximately 59\% of compositional inconsistencies result from the parse of the sub-tree not being included in the parse of the parent tree. For example, suppose that the parent tree is represented by the query ``how many people live in Texas?'' and the replaced subtree corresponds to the query ``Texas.'' A model might correctly output the parse \texttt{stateid(`texas')} for the latter but then output the parse \texttt{`population(state(name(``texas'')))'} for the parent tree query, which is inconsistent with the parse of the subtree. In the other approximately 41\% of cases, the compositional inconsistencies on GeoQuery are caused by an incorrect parse of the child node (\emph{i.e.} $p\not\sim g_{p_\theta}(p)$, or case (1) of compositional inconsistency in Definition \ref{def:consistency-compositional}).

For tasks where a precise definition of correctness does exist, such as arithmetic, it can be useful to understand the relationship between correctness and compositional consistency. Figure \ref{fig:sa_corr_vs_cons} shows this relationship for all four model sizes. There exists a notable linear relationship between correctness and compositional consistency, but all models except for \verb|davinci-001| are slightly more correct than consistent. This indicates that models may output correct answers for the final expression but not for intermediate steps, or vice versa. Nonetheless, training LLMs to optimize solely for correctness appears to be helpful for improving compositional consistency in such tasks. For other tasks where a precise definition of correctness does not exist, this may not be as feasible a solution.



\section{Related Work}
Our work is inspired by an extensive body of literature that has defined and evaluated model consistency in a variety of ways. \citet{elazar_measuring_2021} defines consistency as the ability for the LLM to give consistent responses to semantically equivalent contexts, such as paraphrased contexts. \citet{jang-etal-2022-becel} supplements this definition with multiple other categories of logical consistency, such as negational, symmetric, transitive, and additive consistency. Similar to our results, they show that many modern LLMs do not exhibit strong consistency according to these definitions. 

Yet other work has highlighted the inconsistency of LLM predictions across paraphrases of the same input for a variety of downstream tasks, including knowledge extraction \citep{elazar_measuring_2021, fierro-sogaard-2022-factual, newman2022padapters}, truthfulness \citep{raj2022measuring}, summarization \citep{kryscinski-etal-2020-evaluating}, and natural language understanding \citep{jang2021accurateyetinconsistent, zhou-etal-2022-prompt}. Various remedies have been proposed for this issue -- \citet{elazar_measuring_2021} proposes a novel consistency loss that minimizes the 2-sided KL divergence between paraphrases, \citet{jang2021accurateyetinconsistent} proposes using multi-task training with paraphrase identification, and \citet{newman2022padapters} proposes training additional adapter layers to map paraphrased prompts to the same continuous representation. On the other hand, \citet{dziri2023faith} suggest that the failure of LLMs to consistently reason correctly on compositional tasks is an intrinsic characteristic of the Transformer architecture -- as the average parallelism of a compositional task increases, the expected error of the Transformer increases exponentially.

Our work augments the past literature by formally defining two new types of logical consistency that are crucial for valid multi-step reasoning and that have not been studied before. We additionally validate the poor performance of modern LLMs on these new types of consistency and further the understanding of why LLMs fail to generalize well on compositional tasks.


\section{Conclusion}
We have proposed two types of language model self-consistency that are important for the reliability and logically valid reasoning of LLMs on multi-step tasks. Despite the GPT-3 and GPT-4 models' generally impressive performance on a wide variety of tasks, these models still perform inconsistently on both hypothetical and compositional consistency prompts, although larger models appear to perform better. This furthers our understanding of how these otherwise impressive LLMs fail to generalize well on compositional tasks and suggests an additional reason not to trust the outputs of LLMs on complex compositional tasks, especially without extensive empirical validation. Further work is required in order to improve the logical consistency of LLM reasoning, and to investigate whether novel training techniques or further scaling improve hypothetical or compositional consistency.

\section*{Acknowledgements}
We are grateful to Eugene Choi, Richard Pang, and Nikita Nangia for helpful discussions and feedback about the design and implementation of this work. This work was supported by National Science Foundation Awards 1922658 and 2046556. 
CZ is supported by the DARPA PTG program.  Any opinions, findings, and conclusions or recommendations expressed in this material are those of the author(s) and do not necessarily reflect the views of the National Science Foundation and DARPA. KC is additionally supported by 42dot, Hyundai Motor Company (under the project Uncertainty in Neural Sequence Modeling) and the Samsung Advanced Institute of Technology (under the project Next Generation Deep Learning: From Pattern Recognition to AI). This project has also benefited from financial support to SB by Eric and Wendy Schmidt (made by recommendation of the Schmidt Futures program), Open Philanthropy, and Apple. We also thank the NYU High-Performance Computing Center for in-kind support and OpenAI for providing access to and credits for their models via the API Academic Access Program.

\bibliography{anthology,custom}

\begin{thebibliography}{22}
\providecommand{\natexlab}[1]{#1}
\providecommand{\url}[1]{\texttt{#1}}
\expandafter\ifx\csname urlstyle\endcsname\relax
  \providecommand{\doi}[1]{doi: #1}\else
  \providecommand{\doi}{doi: \begingroup \urlstyle{rm}\Url}\fi

\bibitem[Brown et~al.(2020)Brown, Mann, Ryder, Subbiah, Kaplan, Dhariwal,
  Neelakantan, Shyam, Sastry, Askell, Agarwal, Herbert-Voss, Krueger, Henighan,
  Child, Ramesh, Ziegler, Wu, Winter, Hesse, Chen, Sigler, Litwin, Gray, Chess,
  Clark, Berner, McCandlish, Radford, Sutskever, and
  Amodei]{brown_etal_2020_gpt3}
Tom Brown, Benjamin Mann, Nick Ryder, Melanie Subbiah, Jared~D Kaplan, Prafulla
  Dhariwal, Arvind Neelakantan, Pranav Shyam, Girish Sastry, Amanda Askell,
  Sandhini Agarwal, Ariel Herbert-Voss, Gretchen Krueger, Tom Henighan, Rewon
  Child, Aditya Ramesh, Daniel Ziegler, Jeffrey Wu, Clemens Winter, Chris
  Hesse, Mark Chen, Eric Sigler, Mateusz Litwin, Scott Gray, Benjamin Chess,
  Jack Clark, Christopher Berner, Sam McCandlish, Alec Radford, Ilya Sutskever,
  and Dario Amodei.
\newblock Language models are few-shot learners.
\newblock In H.~Larochelle, M.~Ranzato, R.~Hadsell, M.F. Balcan, and H.~Lin
  (eds.), \emph{Advances in Neural Information Processing Systems}, volume~33,
  pp.\  1877--1901. Curran Associates, Inc., 2020.
\newblock URL
  \url{https://proceedings.neurips.cc/paper/2020/file/1457c0d6bfcb4967418bfb8ac142f64a-Paper.pdf}.

\bibitem[Center()]{openai_date_cutoff}
OpenAI~Help Center.
\newblock Do the openai api models have knowledge of current events?
\newblock URL
  \url{https://help.openai.com/en/articles/6639781-do-the-openai-api-models-have-knowledge-of-current-events}.

\bibitem[Dziri et~al.(2023)Dziri, Lu, Sclar, Li, Jiang, Lin, West, Bhagavatula,
  Bras, Hwang, Sanyal, Welleck, Ren, Ettinger, Harchaoui, and
  Choi]{dziri2023faith}
Nouha Dziri, Ximing Lu, Melanie Sclar, Xiang~Lorraine Li, Liwei Jiang,
  Bill~Yuchen Lin, Peter West, Chandra Bhagavatula, Ronan~Le Bras, Jena~D.
  Hwang, Soumya Sanyal, Sean Welleck, Xiang Ren, Allyson Ettinger, Zaid
  Harchaoui, and Yejin Choi.
\newblock Faith and fate: Limits of transformers on compositionality, 2023.

\bibitem[Elazar et~al.(2021)Elazar, Kassner, Ravfogel, Ravichander, Hovy,
  Schütze, and Goldberg]{elazar_measuring_2021}
Yanai Elazar, Nora Kassner, Shauli Ravfogel, Abhilasha Ravichander, Eduard
  Hovy, Hinrich Schütze, and Yoav Goldberg.
\newblock Measuring and {Improving} {Consistency} in {Pretrained} {Language}
  {Models}.
\newblock \emph{Transactions of the Association for Computational Linguistics},
  9:\penalty0 1012--1031, December 2021.
\newblock ISSN 2307-387X.
\newblock \doi{10.1162/tacl_a_00410}.
\newblock URL \url{https://doi.org/10.1162/tacl\_a\_00410}.

\bibitem[Fierro \& S{\o}gaard(2022)Fierro and
  S{\o}gaard]{fierro-sogaard-2022-factual}
Constanza Fierro and Anders S{\o}gaard.
\newblock Factual consistency of multilingual pretrained language models.
\newblock In \emph{Findings of the Association for Computational Linguistics:
  ACL 2022}, pp.\  3046--3052, Dublin, Ireland, May 2022. Association for
  Computational Linguistics.
\newblock \doi{10.18653/v1/2022.findings-acl.240}.
\newblock URL \url{https://aclanthology.org/2022.findings-acl.240}.

\bibitem[Jang et~al.(2021)Jang, Kwon, and
  Lukasiewicz]{jang2021accurateyetinconsistent}
Myeongjun Jang, Deuk~Sin Kwon, and Thomas Lukasiewicz.
\newblock Accurate, yet inconsistent? consistency analysis on language
  understanding models.
\newblock 2021.
\newblock URL \url{https://arxiv.org/abs/2108.06665}.

\bibitem[Jang et~al.(2022)Jang, Kwon, and Lukasiewicz]{jang-etal-2022-becel}
Myeongjun Jang, Deuk~Sin Kwon, and Thomas Lukasiewicz.
\newblock {BECEL}: Benchmark for consistency evaluation of language models.
\newblock In \emph{Proceedings of the 29th International Conference on
  Computational Linguistics}, pp.\  3680--3696, Gyeongju, Republic of Korea,
  October 2022. International Committee on Computational Linguistics.
\newblock URL \url{https://aclanthology.org/2022.coling-1.324}.

\bibitem[Kassner et~al.(2021)Kassner, Tafjord, Sch{\"u}tze, and
  Clark]{kassner-etal-2021-beliefbank}
Nora Kassner, Oyvind Tafjord, Hinrich Sch{\"u}tze, and Peter Clark.
\newblock {B}elief{B}ank: Adding memory to a pre-trained language model for a
  systematic notion of belief.
\newblock In \emph{Proceedings of the 2021 Conference on Empirical Methods in
  Natural Language Processing}, pp.\  8849--8861, Online and Punta Cana,
  Dominican Republic, November 2021. Association for Computational Linguistics.
\newblock \doi{10.18653/v1/2021.emnlp-main.697}.
\newblock URL \url{https://aclanthology.org/2021.emnlp-main.697}.

\bibitem[Kate et~al.(2005)Kate, Wong, and Mooney]{kate-etal-2005-funql}
Rohit~J. Kate, Yuk~Wah Wong, and Raymond~J. Mooney.
\newblock Learning to transform natural to formal languages.
\newblock In \emph{Proceedings of the 20th National Conference on Artificial
  Intelligence - Volume 3}, AAAI'05, pp.\  1062–1068. AAAI Press, 2005.
\newblock ISBN 157735236x.

\bibitem[Kryscinski et~al.(2020)Kryscinski, McCann, Xiong, and
  Socher]{kryscinski-etal-2020-evaluating}
Wojciech Kryscinski, Bryan McCann, Caiming Xiong, and Richard Socher.
\newblock Evaluating the factual consistency of abstractive text summarization.
\newblock In \emph{Proceedings of the 2020 Conference on Empirical Methods in
  Natural Language Processing (EMNLP)}, pp.\  9332--9346, Online, November
  2020. Association for Computational Linguistics.
\newblock \doi{10.18653/v1/2020.emnlp-main.750}.
\newblock URL \url{https://aclanthology.org/2020.emnlp-main.750}.

\bibitem[Li et~al.(2017)Li, Su, Shen, Li, Cao, and
  Niu]{li-etal-2017-dailydialog}
Yanran Li, Hui Su, Xiaoyu Shen, Wenjie Li, Ziqiang Cao, and Shuzi Niu.
\newblock {D}aily{D}ialog: A manually labelled multi-turn dialogue dataset.
\newblock In \emph{Proceedings of the Eighth International Joint Conference on
  Natural Language Processing (Volume 1: Long Papers)}, pp.\  986--995, Taipei,
  Taiwan, November 2017. Asian Federation of Natural Language Processing.
\newblock URL \url{https://aclanthology.org/I17-1099}.

\bibitem[Liu et~al.(2021)Liu, Shen, Zhang, Dolan, Carin, and
  Chen]{liu2021makes}
Jiachang Liu, Dinghan Shen, Yizhe Zhang, Bill Dolan, Lawrence Carin, and Weizhu
  Chen.
\newblock What makes good in-context examples for gpt-$3$?, 2021.

\bibitem[Mitchell et~al.(2022)Mitchell, Noh, Li, Armstrong, Agarwal, Liu, Finn,
  and Manning]{mitchell2022enhancing}
Eric Mitchell, Joseph~J. Noh, Siyan Li, William~S. Armstrong, Ananth Agarwal,
  Patrick Liu, Chelsea Finn, and Christopher~D. Manning.
\newblock Enhancing self-consistency and performance of pretrained language
  models with nli.
\newblock In \emph{Proceedings of the 2022 Conference on Empirical Methods in
  Natural Language Processing (EMNLP)}. Association for Computational
  Linguistics, 2022.
\newblock URL \url{https://ericmitchell.ai/concord.pdf}.

\bibitem[Newman et~al.(2022)Newman, Choubey, and Rajani]{newman2022padapters}
Benjamin Newman, Prafulla~Kumar Choubey, and Nazneen Rajani.
\newblock P-adapters: Robustly extracting factual information from language
  models with diverse prompts.
\newblock In \emph{International Conference on Learning Representations}, 2022.
\newblock URL \url{https://openreview.net/forum?id=DhzIU48OcZh}.

\bibitem[OpenAI()]{openai_model_index}
OpenAI.
\newblock Model index for researchers.
\newblock \url{https://platform.openai.com/docs/model-index-for-researchers}.

\bibitem[OpenAI(2023)]{openai2023gpt4}
OpenAI.
\newblock Gpt-4 technical report, 2023.

\bibitem[Pezeshkpour \& Hruschka(2023)Pezeshkpour and
  Hruschka]{pezeshkpour2023large}
Pouya Pezeshkpour and Estevam Hruschka.
\newblock Large language models sensitivity to the order of options in
  multiple-choice questions, 2023.

\bibitem[Raj et~al.(2022)Raj, Rosati, and Majumdar]{raj2022measuring}
Harsh Raj, Domenic Rosati, and Subhabrata Majumdar.
\newblock Measuring reliability of large language models through semantic
  consistency.
\newblock In \emph{NeurIPS ML Safety Workshop}, 2022.
\newblock URL \url{https://openreview.net/forum?id=SgbpddeEV-C}.

\bibitem[Rubin et~al.(2022)Rubin, Herzig, and Berant]{rubin-etal-2022-learning}
Ohad Rubin, Jonathan Herzig, and Jonathan Berant.
\newblock Learning to retrieve prompts for in-context learning.
\newblock In \emph{Proceedings of the 2022 Conference of the North American
  Chapter of the Association for Computational Linguistics: Human Language
  Technologies}, pp.\  2655--2671, Seattle, United States, July 2022.
  Association for Computational Linguistics.
\newblock \doi{10.18653/v1/2022.naacl-main.191}.
\newblock URL \url{https://aclanthology.org/2022.naacl-main.191}.

\bibitem[{Wikimedia Foundation}()]{wikidump}
{Wikimedia Foundation}.
\newblock Wikimedia downloads.
\newblock URL \url{https://dumps.wikimedia.org}.

\bibitem[Zelle \& Mooney(1996)Zelle and Mooney]{zelle-mooney-1996-geoquery}
John~M. Zelle and Raymond~J. Mooney.
\newblock Learning to parse database queries using inductive logic programming.
\newblock In \emph{Proceedings of the Thirteenth National Conference on
  Artificial Intelligence - Volume 2}, AAAI'96, pp.\  1050–1055. AAAI Press,
  1996.
\newblock ISBN 026251091X.

\bibitem[Zhou et~al.(2022)Zhou, He, Ma, Berg-Kirkpatrick, and
  Neubig]{zhou-etal-2022-prompt}
Chunting Zhou, Junxian He, Xuezhe Ma, Taylor Berg-Kirkpatrick, and Graham
  Neubig.
\newblock Prompt consistency for zero-shot task generalization.
\newblock In \emph{Findings of the Association for Computational Linguistics:
  EMNLP 2022}, pp.\  2613--2626, Abu Dhabi, United Arab Emirates, December
  2022. Association for Computational Linguistics.
\newblock URL \url{https://aclanthology.org/2022.findings-emnlp.192}.

\end{thebibliography}
\bibliographystyle{tmlr}

\newpage
\appendix
\section{Appendix}
\label{sec:appendix}

\begin{figure}[ht!]
    \centering
    \begin{subfigure}[b]{0.45\textwidth}
    \centering
        \includegraphics[width=\textwidth]{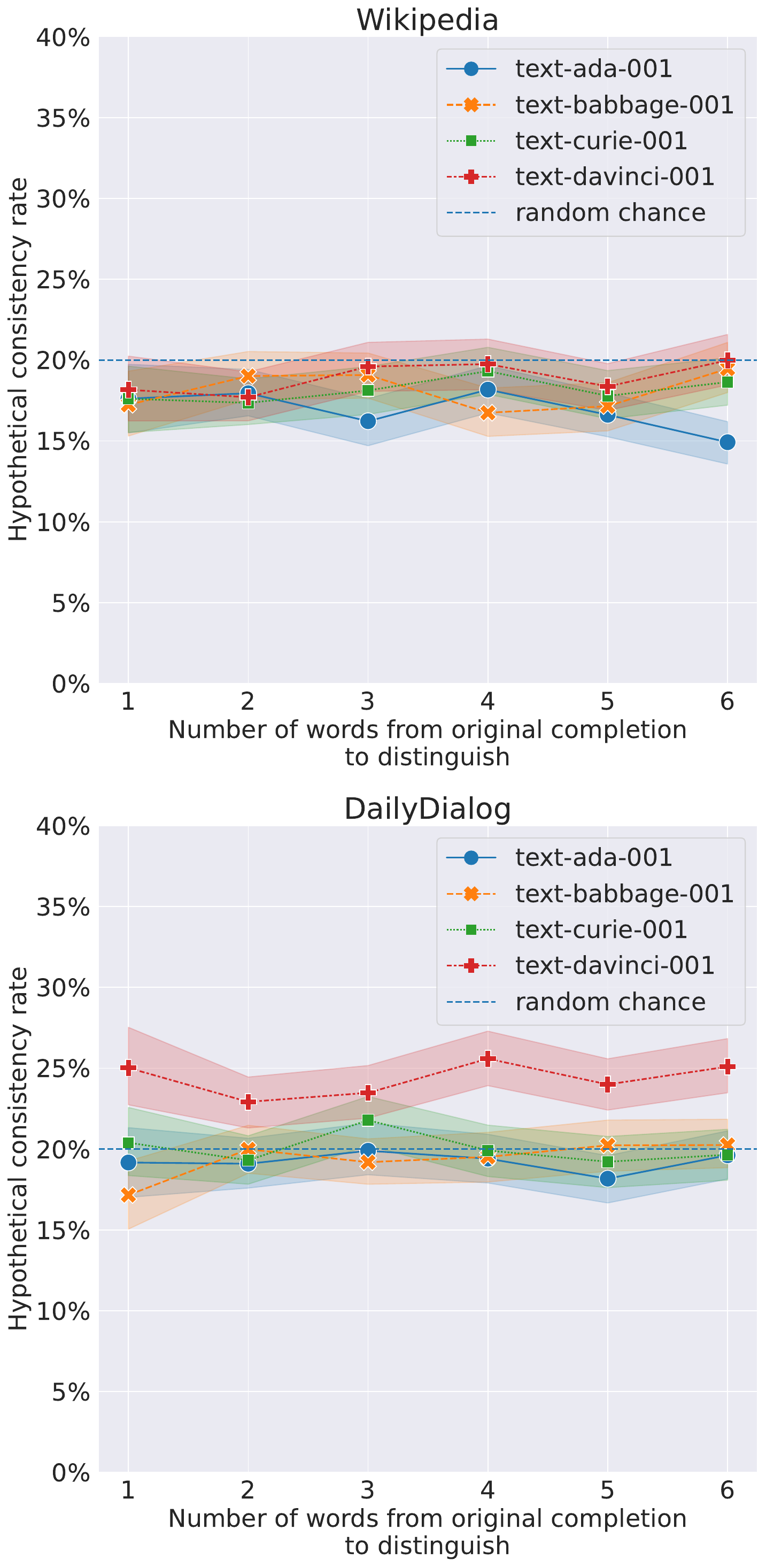}
        \caption{Hypothetical consistency rates with answer choices generated by \texttt{ada-001}, \texttt{babbage-001}, \texttt{curie-001}, and \texttt{davinci-001}.}
        \label{fig:hyp-cons-davinci-001}
    \end{subfigure} \hfill
    \begin{subfigure}[b]{0.45\textwidth}
        \centering
        \includegraphics[width=\textwidth]{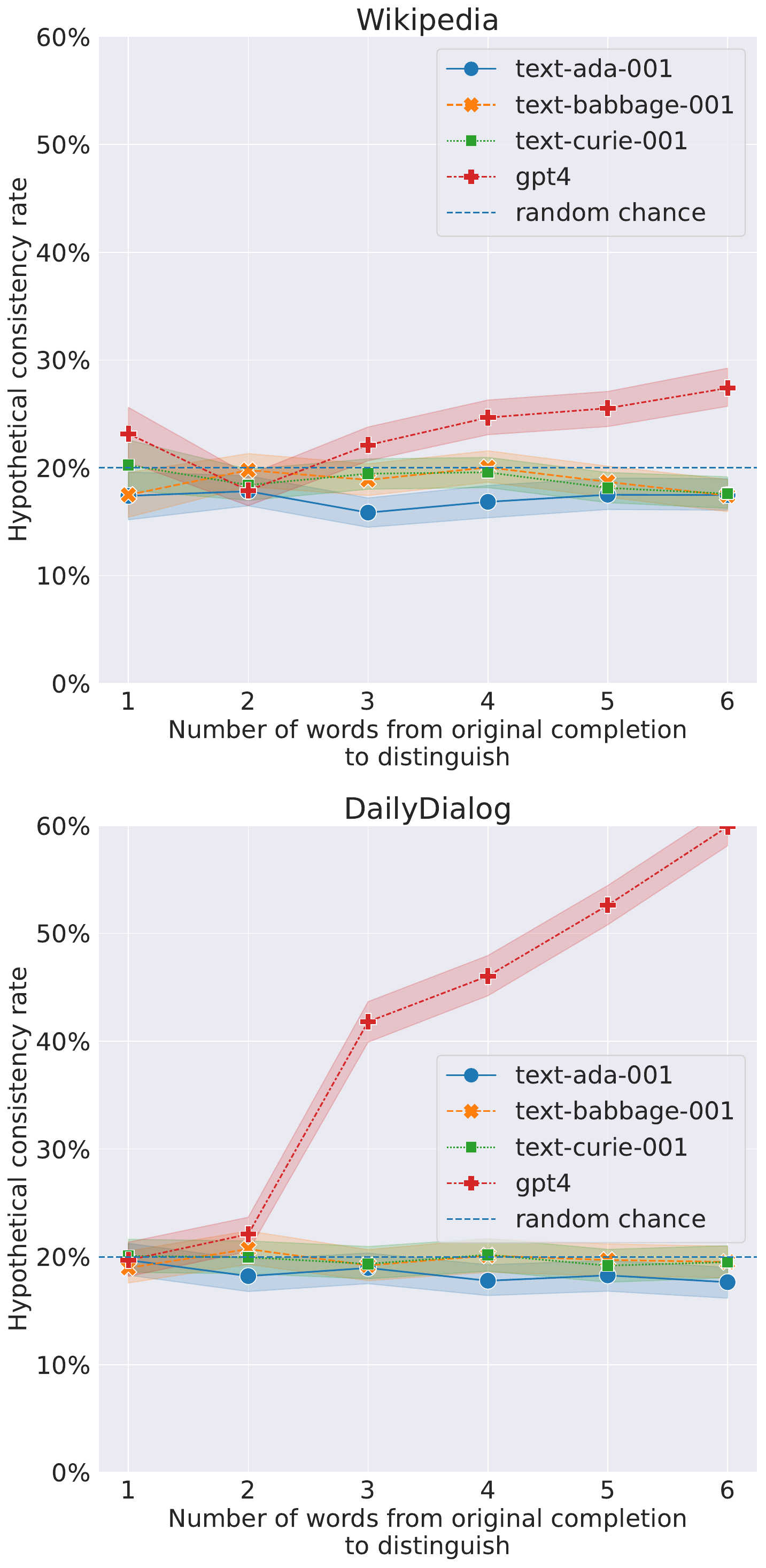}
        \caption{Hypothetical consistency rates with answer choices generated by \texttt{ada-001}, \texttt{babbage-001}, \texttt{curie-001}, and \texttt{gpt4}.}
        \label{fig:hyp-cons-gpt4}
    \end{subfigure}
    \caption{Hypothetical consistency rates on multiple-choice hypothetical consistency prompts for the Wikipedia and DailyDialog datasets. Each multiple-choice prompt contains answer choices generated by the designated four models and an additional answer choice containing the actual continuation of the prompt in the dataset. Each line is the average taken across all $k$-shot prompts, for $k\in[1,\cdots,10]$. The shaded region represents the 95\% confidence interval computed with nonparametric bootstrapping. The label ``number of words from original completion to distinguish'' corresponds to the quantity $m$ in Table \ref{tab:hypothetical-prompt-templates}.}
\end{figure}

\begin{figure}[ht!]
    \centering
    \begin{subfigure}[b]{0.45\textwidth}
        \centering
        \includegraphics[width=\textwidth]{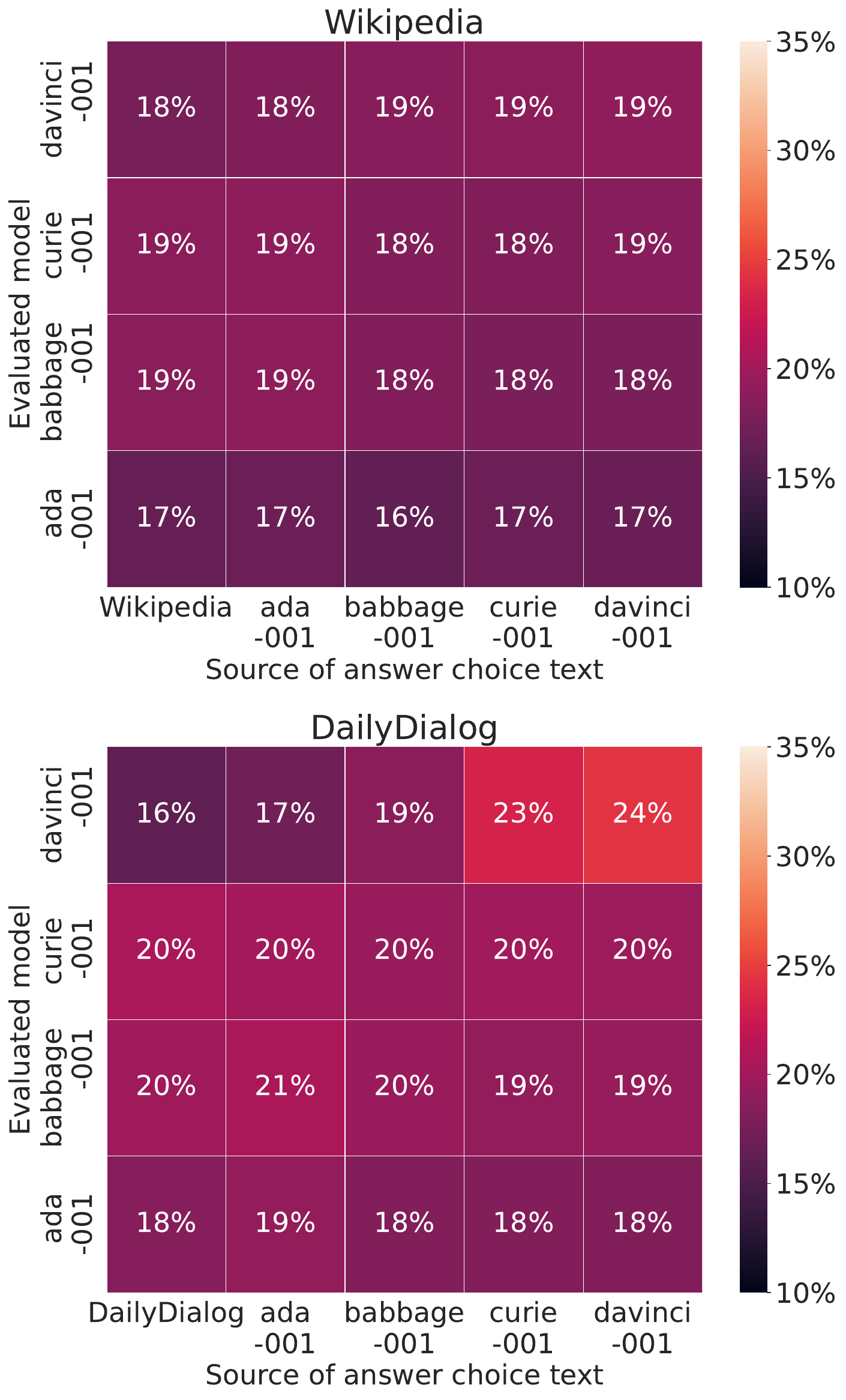}
        \caption{Comparing selected answers on multiple-choice prompts with answer choices generated by \texttt{ada-001}, \texttt{babbage-001}, \texttt{curie-001}, and \texttt{davinci-001}.}
        \label{fig:ans-choice-davinci-001}
    \end{subfigure}\hfill
    \begin{subfigure}[b]{0.45\textwidth}
        \centering
        \includegraphics[width=\textwidth]{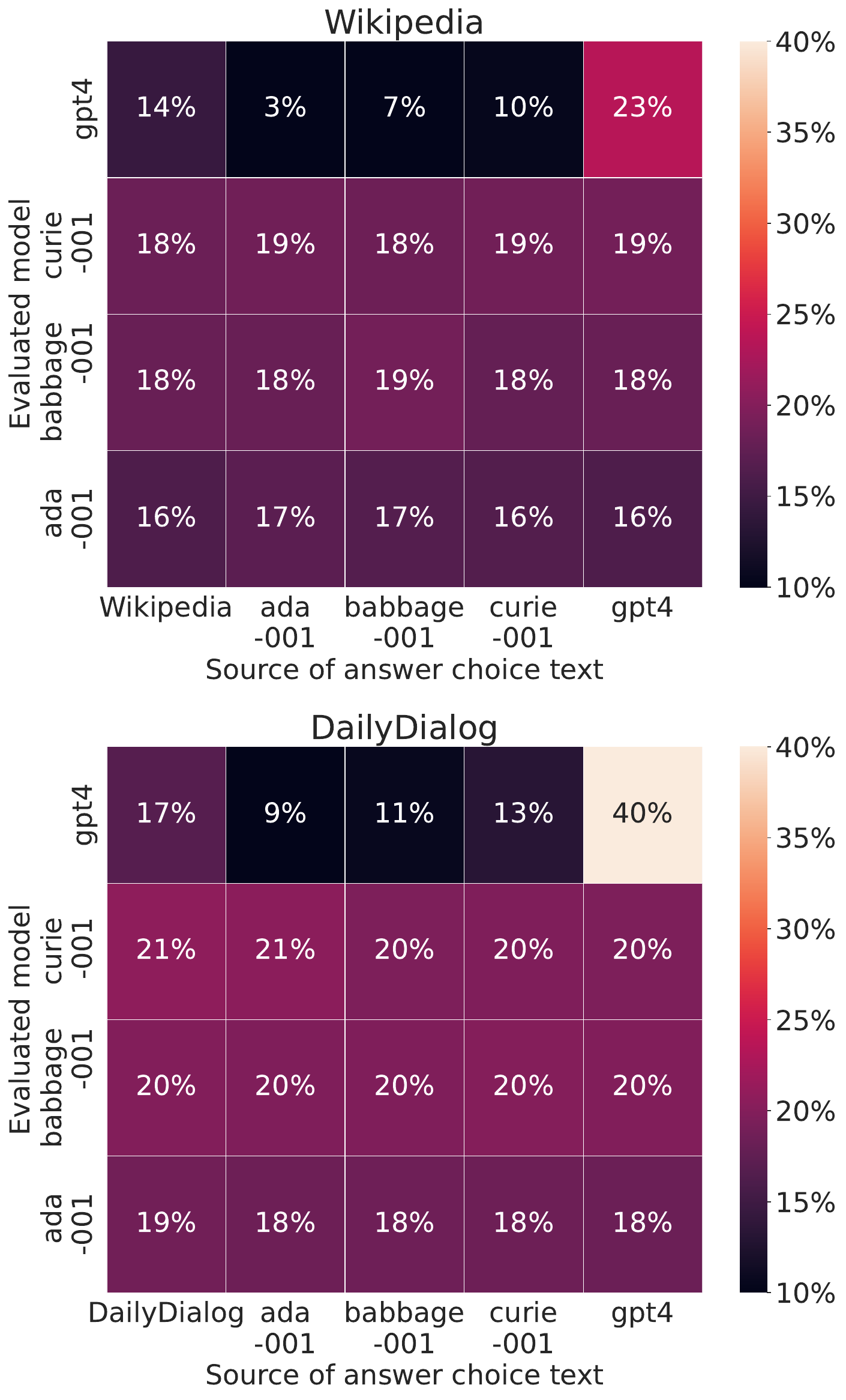}
        \caption{Comparing selected answers on multiple-choice prompts with answer choices generated by \texttt{ada-001}, \texttt{babbage-001}, \texttt{curie-001}, and \texttt{gpt4}.}
        \label{fig:ans-choice-gpt4}
    \end{subfigure}
    \caption{The proportion of the time that each model (\texttt{ada-001}, \texttt{babbage-001}, \texttt{curie-001}, and \texttt{davinci-001}) selects each possible answer choice when prompted with a hypothetical consistency prompt. Model outputs that could not be parsed into an answer choice are not included. The columns labeled ``Wikipedia'' and ``DailyDialog'' correspond to the answer choice containing the completion from the original dataset. Model outputs that could not be parsed into an answer choice are not included.}
\end{figure}

\paragraph{Comparison of Hypothetical Consistency Against \texttt{davinci-001} and \texttt{gpt4}}\label{app:davinci-001}
We also run the same hypothetical consistency experiments on \texttt{davinci-001} and \texttt{gpt4}. Hypothetical consistency rates for \texttt{davinci-001} versus the smaller models are shown in Figure \ref{fig:hyp-cons-davinci-001}, where trends are similar, but \texttt{davinci-001} performs at random chance on Wikipedia, like all the other \texttt{-001} series models. On DailyDialog, however, \texttt{davinci-001} performs noticeably better than all the other model sizes. Similar trends occur in Figure \ref{fig:ans-choice-davinci-001}, where most models are equally likely to select each answer choice on the Wikipedia dataset, and \texttt{davinci-001} is more likely to select either its own or \texttt{curie-001}'s completion on the DailyDialog dataset.

In contrast, when \texttt{gpt4} is tasked with distinguishing its own completions from those of \texttt{ada-001}, \texttt{babbage-001}, \texttt{curie-001}, and the dataset, \texttt{gpt4} performs notably better than both \texttt{davinci-001} and \texttt{davinci-003} on DailyDialog, reaching 59.9\% hypothetical consistency when the number of words to distinguish is 6 (Figure \ref{fig:hyp-cons-gpt4}). However, its hypothetical consistency rate on Wikipedia is comparable to that of \texttt{davinci-003} (Figure \ref{fig:wiki-dd-sk-acc}), ranging from 17.9\% to 27.4\%. Figure \ref{fig:ans-choice-gpt4} also demonstrates that \texttt{gpt4} is significantly more likely to select its own completion than the other models are.

It is unclear why \texttt{gpt4} is more consistent on DailyDialog than previous models of similar capacity (\emph{i.e.} \texttt{davinci-001} and \texttt{davinci-003}). Little is known about \texttt{gpt4}'s architecture or training, aside from its multimodal abilities and training via reinforcement learning from human feedback \citep[RLHF,][]{openai2023gpt4}. Since \texttt{davinci-003} was also trained with RLHF \citep{openai_model_index}, it is possible that other changes in architecture or training may have also contributed to the significant improvement in hypothetical consistency.

\begin{table*}[ht!]
\caption{Average percent edit distances between completions from \texttt{davinci-001} versus completions from the three other models and two datasets.} \label{tab:edit-distances-davinci001}
\begin{center}
\begin{small}
\begin{tabular}{lR{3cm}R{3cm}R{3cm}R{3cm}} \toprule
{Dataset} & {\texttt{davinci-001} / \texttt{ada-001}} & {\texttt{davinci-001} / \texttt{babbage-001}} & {\texttt{davinci-001} / \texttt{curie-001}} & {\texttt{davinci-001} / Dataset} \\ \midrule
Wikipedia & 73.6\% &            71.1\% &          64.4\% &              71.4\% \\
DailyDialog & 71.4\% &            70.1\% &          69.3\% &                 79.4\% \\ \bottomrule
\end{tabular}
\end{small}
\end{center}
\end{table*}
\begin{table*}[ht!]
\caption{Average percent edit distances between completions from \texttt{gpt4} versus completions from the three other models and two datasets.} \label{tab:edit-distances-gpt4}
\begin{center}
\begin{small}
\begin{tabular}{lR{3cm}R{3cm}R{3cm}R{3cm}} \toprule
{Dataset} & {\texttt{gpt4} / \texttt{ada-001}} & {\texttt{gpt4} / \texttt{babbage-001}} & {\texttt{gpt4} / \texttt{curie-001}} & {\texttt{gpt4} / Dataset} \\ \midrule
Wikipedia & 74.1\% &            71.7\% &          68.9\% &              68.9\% \\
DailyDialog & 81.3\% &            80.1\% &          77.8\% &                 81.3\% \\ \bottomrule
\end{tabular}
\end{small}
\end{center}
\end{table*}

It is also unlikely that \texttt{gpt4}'s improvements in hypothetical consistency on DailyDialog can be attributed to dataset memorization. Firstly, we selected only prompts from both Wikipedia and DailyDialog for which all five answer choices were distinct, so \texttt{gpt4}'s completion could not have been identical to that of the original DailyDialog dataset. Secondly, we computed the average percent edit distance (the edit distance divided by the length of the longer string) between the completions of \texttt{gpt4} versus the completions of the smaller models and the datasets, which are shown in Table \ref{tab:edit-distances-gpt4}. The average percent edit distance between \texttt{gpt4} completions and DailyDialog continuations was 81.3\%, indicating that \texttt{gpt4} was generating strings that were substantially different from the dataset. Similar trends were found when comparing \texttt{davinci-001} and \texttt{davinci-003} against the completions of the other models and the datasets (Tables \ref{tab:edit-distances-davinci001} and \ref{tab:edit-distances-davinci003}).

\end{document}